\newtheorem{theorem}{Theorem}
\newtheorem{lemma}{Lemma}
\newlength{\boxwidth}
\newcommand{\singlespacing}%
{\small\normalsize}
\newcommand{\ignore}[1]{}
\newcommand{\R}{\mathbb{R}}
\newcommand{\Z}{\mathbb{Z}}
\newcommand{\FF}{\mathbb{F}}
\begin{document}
\title{Optimal Query Complexity for Reconstructing Hypergraphs}
\date{}
\author{Nader H. Bshouty \\ Technion,
Israel\\bshouty@cs.technion.ac.il \and Hanna Mazzawi \\ Technion,
Israel\\ hanna@cs.technion.ac.il } \maketitle
\begin{abstract}
In this paper we consider the problem of reconstructing a hidden
weighted hypergraph of constant rank using additive queries. We
prove the following: Let $G$ be a weighted hidden hypergraph of
constant rank with~$n$ vertices and $m$ hyperedges. For any $m$
there exists a non-adaptive algorithm that finds the edges of the
graph and their weights using
$$
O\left(\frac{m\log n}{\log m}\right)
$$
additive queries. This solves the open problem in [S. Choi, J. H.
Kim. Optimal Query Complexity Bounds for Finding Graphs. {\em STOC},
749--758,~2008].

When the weights of the hypergraph are integers that are less than
$O(poly(n^d/m))$ where $d$ is the rank of the hypergraph (and
therefore for unweighted hypergraphs) there exists a non-adaptive
algorithm that finds the edges of the graph and their weights using
$$
O\left(\frac{m\log \frac{n^d}{m}}{\log m}\right).
$$
additive queries.

Using the information theoretic bound the above query complexities
are tight.
\end{abstract}
\section{Introduction}
In this paper we consider the following problem of reconstructing
weighted hypergraphs of constant rank\footnote{Sometimes called
dimension.} (the maximal size of a hyperedge) using additive
queries: Let $G=(V,E,w)$ be a weighted hidden hypergraph where
$E\subset 2^V$, $|e|$ is constant for all $e\in E$, $w:E\to \R$, and
$n$ is the number of vertices in $V$. Denote by $m$ the size of~$E$.
Suppose that the set of vertices $V$ is known and the set of edges
$E$ is unknown. Given a set of vertices $S\subseteq V$, an additive
query, $Q_G(S)$, returns the sum of weights in the sub-hypergraph
induced by~$S$. That is,
$$
Q_G(S) = \sum_{e\in E \cap 2^S} w(e).
$$
Our goal is to exactly reconstruct the set of edges using additive
queries.

One can distinguish between two types of algorithms to solve the
problem. Adaptive algorithms are algorithms that take into account
outcomes of previous queries while non-adaptive algorithms make
all queries in advance, before any answer is known. In this paper,
we consider non-adaptive algorithms for the problem. Our concern
is the query complexity, that is, the number of queries needed to
be asked in order to reconstruct the hypergraph.

The hypergraph reconstructing problem has known a significant
progress in the past decade. For unweighted hypergraph of rank $d$
the information theoretic lower bound gives $$\Omega
\left(\frac{m\log \frac{n^d}{m}}{\log m}\right)$$ for the query
complexity for any adaptive algorithm for this problem.

Many independent results \cite{D75,GK00,DR81,BG07} have proved a
tight upper bound for hypergraph of rank~$1$, i.e., loops. A tight
upper bound was proved for some subclasses of unweighted hypergraphs
of rank two, i,e., graphs (Hamiltonian graphs, matching, stars and
cliques etc.) \cite{G97,GK98,GK00,BGK}, unweighted graphs with
$\Omega(dn)$ edges where the degree of each vertex is bounded by $d$
\cite{GK00}, graphs with $\Omega(n^2)$ edges \cite{GK00} and then
the former was extended to $d$-degenerate unweighted graphs with
$\Omega(dn)$ edges \cite{G97}, i.e., graphs that their edges can be
changed to directed edges where the out-degree of each vertex is
bounded by~$d$. A recent paper by Choi and Kim, \cite{CH}, gave a
tight upper bound for all unweighted graphs. In this paper we give a
tight upper bound for all unweighted hypergraphs of constant rank.
Our bound is tight even for weighted hypergraphs with integer
weights $|w|= poly(n^d/m)$ where~$d$ is the rank of the hypergraph.

\begin{figure}\begin{center}
\begin{tabular}{l|c|c|c|}
                     & Tight Upper  & Adaptive   & Non-adaptive  \\
                     &  Bound & Poly. time  &   Poly. time\\
\hline {\bf Loops rank$=1$}\\ \hline

Unweighted Loops      &  \cite{D75,GK00,DR81,BG07}   &   \cite{BSH} &OPEN \\

\hline

Bounded Weighted Loops  & \cite{CH} & OPEN & OPEN \\

\hline

Unbounded Weighted Loops & \cite{BM2} & OPEN${}^\dag$ & OPEN${}^\S$ \\ \hline {\bf Graph rank$=2$}\\

\hline

Unweighted Graph      &  \cite{CH}   &   \cite{M10} &OPEN \\

\hline

Bounded Weighted Graph  & \cite{CH,BM1} & OPEN & OPEN \\

\hline

Unbounded Weighted Graph & \cite{BM2} & OPEN${}^\dag$ & OPEN \\ \hline {\bf Hypergraph rank$>2$}\\

\hline

Unweighted HyperGraph   & Ours & OPEN & OPEN \\

\hline

Unbounded Weighted Hypergraph & Ours & OPEN${}^\dag$ & OPEN \\

\hline

\end{tabular}\caption{Results for weighted and un-weighted hypergraphs
with optimal query complexity. ${}^\dag$A non-optimal adaptive
query complexity algorithm for Hypergraph can be found in
\cite{CJK}. ${}^\S$ A non-optimal non-adaptive query complexity
algorithms can be found in \cite{IR} and the references within
it.}
\end{center}
\end{figure}

\ignore{ Grebinski and Kucherov first proved in \cite{GK00} the
existence of a non-adaptive algorithm for reconstructing the class
of $d$-bounded degree unweighted graphs using $O(dn)$ queries.
They also show a polynomial time algorithm for reconstructing an
unweighted graph using $O(n^2/\log n)$ queries. In \cite{G97}
Grebinski proved the existence of a non-adaptive reconstructing
algorithm for $d$-degenerate unweighted graphs that uses $O(dn)$
queries. In \cite{RS07} Reyzin and Srivastava show a polynomial
time algorithm for reconstructing an unweighted graph using
$O(m\log n)$ queries. In \cite{CH}, S.~Choi and J. Han Kim prove
the existence of a non-adaptive algorithm for reconstructing any
unweighted graph using $O\left(\frac{m\log \frac{n^2}{m}}{\log
m}\right)$.}

For weighted hypergraph of constant rank with unbounded weights the
information theoretic lower bound gives $$ \Omega\left(\frac{m\log
n}{\log m}\right)
$$
In \cite{CH}, Choi and Kim prove a tight upper bound for loops
(hypergraph of rank~$1$). For weighted graphs (hypergraph of rank
$2$) Choi and Kim, \cite{CH}, proved the following: If $m>(\log
n)^\alpha$ for sufficiently large $\alpha$, then, there exists a
non-adaptive algorithm for reconstructing a weighted graph where the
weights are real numbers bounded between $n^{-a}$ and $n^b$ for any
positive constants $a$ and~$b$ using
$$
O\left(\frac{m\log n}{\log m}\right)
$$
queries.

In \cite{BM1}, Bshouty and Mazzawi close the gap in $m$ and proved
that for any weighted graph where the weights are bounded between
$n^{-a}$ and $n^b$ for any positive constants $a$ and $b$ and {\it
any} $m$ there exists a non-adaptive algorithm that reconstructs
the hidden graph using
$$
O\left(\frac{m \log n}{\log m}\right)
$$
queries. Then in \cite{BM2} they extended the result to any
weighted graph with any unbounded weights.

In this paper extend all the above results to any hypergraph of
constant rank, i.e., the edges of the graph has constant size. This
solves the open problems in~\cite{CH,BM1,BM2}.

The paper is organized as follows: In Section~2, we present
notation, basic tools and some background. In Section~3, we prove
the main result.

\section{Preliminaries}
In this section we present some background, basic tools and
notation.

For an integer $r$ let $[r]$ be the set $\{1,2,\ldots,r\}$. For
$S\subset [r]$ we define $x^S\in \{0,1\}^r$ where $x^S_i=1$ if and
only if $i\in S$. The inverse operation is $S^x=\{i\ |\ x_i=1\}$. We
say that $x_1,\ldots,x_d\in \{0,1\}^n$ are {\it pairwise disjoint}
if for every $i\not=j$, we have $x_i* x_j={\bf 0}$ where $*$ is
component-wise product of two vectors. For a prime $p$ and integers
$a$ and $b$ we write $a=_pb$ for $a=b\mod p$. We will also allow
$p=\infty$. In this case $a$ and $b$ can be any real numbers and
$a=_\infty b$ will mean $a=b$ as real numbers.

\subsection{$d$-Dimensional Matrices}
A $d$-{\it dimensional matrix} $A$ of {\it size}
$n_1\times\cdots\times n_d$ over a field $\FF$ is a map
$A:\prod_{i=1}^d[n_i]\to \FF$. We denote by
$\FF^{n_1\times\cdots\times n_d}$ the set of all $d$-dimensional
matrices $A$ of size $n_1\times\cdots\times n_d$. We write
$A_{i_1,\ldots,i_d}$ for $A(i_1,\ldots,i_d)$.

The zero map is denoted by $0^{n_1\times\cdots\times n_d}$. The
matrix $$B=(A_{i_1,i_2,\ldots,i_d})_{i_1\in I_1,i_2\in
I_2,\ldots,i_d\in I_d},$$ where $I_j\subseteq [n_j]$, is the
$|I_1|\times\cdots\times |I_d|$ matrix where
$B_{j_1,\ldots,j_d}=A_{\ell_1,\ldots,\ell_d}$ and $\ell_i$ is the
$j_i$th smallest number in $I_i$. When $I_j=[n_j]$ we just write $j$
and when $I_j=\{\ell\}$ we just write $j=\ell$. For example,
$(A_{i_1,i_2,\ldots,i_d})_{i_1,i_2=\ell,i_3\in I_2,\ldots,i_d\in
I_d}=(A_{i_1,i_2,\ldots,i_d})_{i_1\in [n_1],i_2\in\{\ell\},i_3\in
I_2,\ldots,i_d\in I_d}.$

When $n_1=n_2=\cdots=n_d=n$ then we denote
$\FF^{n_1\times\cdots\times n_d}$ by $\FF^{\times_dn}$ and
$0^{n_1\times\cdots\times n_d}$ by~$0^{\times_dn}$.

We say that the entry $A_{i_1,i_2,\ldots,i_d}$ is of {\it dimension}
$r$ if $|\{i_1,\ldots,i_d\}|=r.$ For $d$-dimensional matrix $A$ we
denote by $wt(A)$ the number of points in $\prod_{i=1}^d[n_i]$ that
are mapped to non-zero elements in $\FF$. We denote by $wt_r(A)$ the
number of points in $\prod_{i=1}^d[n_i]$ of dimension $r$ that are
mapped to non-zero elements in~$\FF$. Therefore,
$wt(A)=wt_1(A)+wt_2(A)+\cdots+wt_{d}(A).$ We denote by
${\mathcal{A}}_{d,m}$ the set of $d$-dimensional matrices $A\in
\FF^{\times_dn}$ where $wt_d(A)\le m$ and
${\mathcal{A}}^\star_{d,m}$ the set of $d$-dimensional matrices
$A\in \FF^{\times_dn}$ where $1\le wt_d(A)\le m$.

For $d$-dimensional matrix $A$ of size $n_1\times\cdots\times n_d$
and $x_i\in \FF^{n_i}$ we define
$$A(x_1,\ldots,x_d)=\sum_{i_1=1}^{n_1}\cdots \sum_{i_d=1}^{n_d}
A_{i_1,i_2,\ldots,i_d}x_{1i_1}\cdots x_{di_d}.$$ The vector
$v=A(\cdot,x_2,\ldots,x_d)$ is $n_1$-dimensional vector that its
$i$th entry is $$\sum_{i_2=1}^{n_2}\cdots \sum_{i_d=1}^{n_d}
A_{i,i_2,\ldots,i_d}x_{2i_2}\cdots x_{di_d}.$$ For a set of
$d$-dimensional matrices ${\mathcal{B}}$, a set $S\subseteq
(\{0,1\}^n)^d$ is called a {\it zero test set} for ${\mathcal{B}}$
if for every $A\in {\mathcal{B}}$, $A\not=0$, there is $x\in S$ such
that $A(x)\not=0$.

A $d$-dimensional matrix is called {\it symmetric} if for every
$i=(i_1,\ldots,i_d)\in [n]^d$ and any permutation $\phi$ on~$[d]$,
we have $A_i=A_{\phi i}$, where $\phi i =
(i_{\phi(1)},\ldots,i_{\phi(d)})$. Notice that for a symmetric
$d$-dimensional matrix $A\in \FF^{\times_d n}$, $x_i\in \{0,1\}^n$
and any permutation $\phi$ on $[d]$, we have
$A(x_1,\ldots,x_d)=A(x_{\phi(1)},\ldots,x_{\phi(d)}).$

We will be interested mainly in the fields $\FF=\R$ the field of
real numbers and $\FF=\Z_p$ the field of integers modulo~$p$ and in
matrices of constant $d=O(1)$ dimension. Also $p>d!$. Although it
seems that we are restricting the parameters, the final result has
no restriction on the parameters except for $d=O(1)$. We will also
abuse the notations $\Z_p$ and $=_p$ and allow $p=\infty$ (so in
this paper $\infty$ is also prime number). In that case
$\Z_\infty=\R$ and $=_\infty$ is equality in the filed of real
numbers.

\subsection{Hypergraph}
A {\it hypergraph} $G$ is a pair $G = (V,E)$ where $V=[n]$ is a set
of elements, called nodes or vertices, and $E$ is a set of non-empty
subsets of $2^V$ called {\it hyperedges} or {\it edges}. The {\it
rank} $r(G)$ of a hypergraph $G$ is the maximum cardinality of any
of the edges in the hypergraph. A hypergraph is called $d$-{\it
uniform} if all of its edges are of size~$d$.

A {\it weighted hypergraph} $G= (V,E,w)$ over $\Z_p$ is a
hypergraph $(V,E)$ with a weight function $w:E\to \Z_p$. For two
weighted hypergraph $G_1=(V,E_1,w_1)$ and $G_2=(V,E_2,w_2)$ we
define the weighted hypergraph $G_1-G_2=(V,E,w)$ where $E=\{ e\in
E_1\cup E_2\ | \ w_1(e)\not=w_2(e)\},$ and for every $e\in E$,
$w(e)=w_1(e)-w_2(e).$ Obviously, $G_1=G_2$ if and only if
$G_1-G_2$ is an independent set, i.e., $E=\emptyset$.

We denote by ${\mathcal{G}}_d$ the set of all weighted hypergraphs
over $\Z_p$ of rank at most~$d$, ${\mathcal{G}}_{d,m}$ the set of
all weighted hypergraphs over $\Z_p$ of rank at most $d$ and at most
$m$ edges and ${\mathcal{G}}^\star_{d,m}$ the set of all weighted
hypergraphs over $\Z_p$ of rank $d$ and at most $m$ edges.

Let $w^\star:2^V\to \Z_p$ be $w$ extended to all possible edges
where for $e\in E$, $w^\star(e)=w(e)$ and for $e\not\in E$,
$w^\star(e)=0$.

An {\it adjacency $d$-dimensional matrix of a weighted hypergraph}
 $G$ is a $d$-dimensional matrix $A_d^G$ where $d\ge r(G)$ such that
for every set $e=\{i_1,i_2,\ldots,i_\ell\}$ of size at most $d$ we
have $A^G_{d( j_1,\ldots,j_d)}=_pw^\star(e)/N(d,\ell)$ for all
$j_1,\ldots,j_d$ such that
$\{j_1,j_2,\ldots,j_d\}=\{i_1,\ldots,i_\ell\}$ where
$$N(d,\ell)=\sum_{i=0}^\ell (-1)^i{\ell\choose i}(\ell-i)^d.$$
That is, $N(d,\ell)$ is the number of possible sequences
$(j_1,\ldots,j_d)$ such that
$\{j_1,\ldots,j_d\}=\{i_1,\ldots,i_\ell\}$. Note that
$N(d,\ell)\le d!<p$ and therefore $N(d,\ell)\not=_p 0$ and $A^G_d$
is well defined.

It is easy to see that the adjacency matrix of a weighted hypergraph
is a symmetric matrix and $r(G)=r$ if and only if the adjacency
matrix of $G$ has an non-zero entry of dimension $r$ and all entries
of dimension greater than~$r$ are zero.

\subsection{Additive Model}
In the Additive Model the goal is to exactly learn a hidden
hypergraph with minimal number of additive queries. Given a set of
vertices $S\subseteq V$, an {\it additive query},~$Q_G(S)$,
returns the sum of weights in the subgraph induces by~$S$. That
is,
$
Q_G(S) =_p \sum_{e\in E \cap 2^S} w(e).
$
Our goal is to exactly reconstruct the set of edges and find their
weights using additive queries. See the many applications of this
problem in \cite{BGK,CH,CJK}.

We say that the set ${\mathcal{S}}=\{S_1,S_2,\cdots,S_k\}\subseteq
2^V$ is a {\it detecting set} for ${\mathcal{G}}_{d,m}$ if for any
hypergraph $G\in{\mathcal{G}}_{d,m}$ there is $S_i$ such that
$Q_G(S_i)\not=0$. We say that the set
${\mathcal{S}}=\{S_1,S_2,\cdots,S_k\}\subseteq 2^V$ is a {\it search
set} for ${\mathcal{G}}_{d,m}$ if for any two distinct hypergraphs
$G_1,G_2\in{\mathcal{G}}_{d,m}$ there is $S_i$ such that
$Q_{G_1}(S_i)\not=Q_{G_2}(S_i)$. That is, given $(Q_G(S_i))_i$ one
can uniquely determines $G$. We now prove the following,

\begin{lemma} \label{dtos}If ${\mathcal{S}}=\{S_1,S_2,\cdots,S_k\}\subseteq
2^V$ is a detecting set for ${\mathcal{G}}_{d,2m}$ then it is a
search set for ${\mathcal{G}}_{d,m}$.
\end{lemma}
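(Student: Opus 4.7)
The plan is a standard detecting-to-searching reduction via the difference construction $G_1-G_2$ already defined in the excerpt. I will prove the contrapositive: if $\mathcal{S}$ fails to be a search set for $\mathcal{G}_{d,m}$, then it fails to be a detecting set for $\mathcal{G}_{d,2m}$.

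Suppose $\mathcal{S}=\{S_1,\ldots,S_k\}$ is not a search set for $\mathcal{G}_{d,m}$. Then there exist two distinct hypergraphs $G_1=(V,E_1,w_1),G_2=(V,E_2,w_2)\in\mathcal{G}_{d,m}$ with $Q_{G_1}(S_i)=Q_{G_2}(S_i)$ for every $i\in[k]$. Form $G=G_1-G_2=(V,E,w)$ as defined just before the lemma. I will verify three things about $G$: (i) $G\in\mathcal{G}_{d,2m}$, (ii) $G$ has at least one edge, and (iii) $Q_G(S_i)=0$ for every $i$. Together, (i)--(iii) directly contradict the assumption that $\mathcal{S}$ is a detecting set for $\mathcal{G}_{d,2m}$, completing the proof.

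For (i), note that $E\subseteq E_1\cup E_2$ by definition, so $|E|\le|E_1|+|E_2|\le 2m$, and every $e\in E$ satisfies $|e|\le d$ because it already appears in $E_1$ or $E_2$. For (ii), since $G_1\neq G_2$ the remark immediately preceding the lemma (``$G_1=G_2$ if and only if $G_1-G_2$ is an independent set'') guarantees $E\neq\emptyset$. For (iii), I would use the linearity of the additive query. Extending $w_1,w_2,w$ to $w_1^\star,w_2^\star,w^\star:2^V\to\Z_p$ by zero outside their support (as the paper does), we have $w^\star(e)=w_1^\star(e)-w_2^\star(e)$ for \emph{every} $e\in 2^V$: on $E$ this is the definition, and outside $E_1\cup E_2$ both sides are zero, while on $(E_1\cup E_2)\setminus E$ equality of $w_1$ and $w_2$ forces both sides to vanish. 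Hence for each $S_i$,
\begin{equation*}
Q_G(S_i)=\sum_{e\in 2^{S_i}} w^\star(e)=\sum_{e\in 2^{S_i}} w_1^\star(e)-\sum_{e\in 2^{S_i}} w_2^\star(e)=Q_{G_1}(S_i)-Q_{G_2}(S_i)=0.
\end{equation*}

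There is no real obstacle here; the only subtlety worth flagging is the bookkeeping in step (iii), namely that one must extend the weight functions to all of $2^V$ before splitting the sum, in order to handle edges that lie in $E_1\cup E_2$ but not in $E$. Once this is done, the detecting property applied to the nonzero hypergraph $G\in\mathcal{G}_{d,2m}$ would produce some $S_i$ with $Q_G(S_i)\neq 0$, contradicting the computation above.
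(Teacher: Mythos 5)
Your proposal is correct and is essentially the paper's own argument: form the difference hypergraph $G=G_1-G_2$, observe $G\in{\mathcal{G}}_{d,2m}$ and is non-empty when $G_1\not=G_2$, and use linearity $Q_G(S_i)=Q_{G_1}(S_i)-Q_{G_2}(S_i)$ to transfer the detecting property into the search property. You merely phrase it as a contrapositive and spell out the bookkeeping (the extension to $w^\star$ and the bound $|E|\le 2m$) that the paper leaves implicit.
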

\begin{proof} Let $G_1,G_2\in {\mathcal{G}}_{d,m}$ be two distinct weighted hypergraphs.
Let $G=G_1-G_2$. Since $G\in{\mathcal{G}}_{d,2m}$ there must be
$S_i\in {\mathcal{S}}$ such that $Q_{G}(S_i)\not =0$. Since
$Q_{G}(S_i)=Q_{G_1}(S_i)-Q_{G_2}(S_i)$ we have
$Q_{G_1}(S_i)\not=Q_{G_2}(S_i)$.
\end{proof}

\subsection{Algebraic View of the Model}\label{ss}
It is easy to show that for any hypergraph $G$ of rank $r$ the
adjacency $d$-dimensional matrix of $G$, $A_d^G$, for $d\ge r$, is
symmetric, contains a nonzero entry of dimension $r$ and
$$Q_G(S)=_p{A_d^G(x^S,x^S,\stackrel{d}{\ldots},x^S)}\stackrel{\Delta}{=} B_d^G(x^S).$$

For a symmetric $d$-dimensional matrix $A$ let
$B(x)=_pA(x,x,\stackrel{d}{\ldots},x)$ where $x\in\{0,1\}^n$. When
$x_1,\ldots,x_d\in \{0,1\}^n$ are pairwise disjoint the following
lemma shows that $A(x_1,\ldots,x_d)$ can be found by $2^d$ values
of~$B$.
\begin{lemma}\label{st}
If $x_1,\ldots,x_d\in \{0,1\}^n$ are pairwise disjoint then
$$A(x_1,\ldots,x_d)=_p\frac{1}{d!} \sum_{I\in 2^{[d]}}(-1)^{d-|I|}B\left(\sum_{i\in
I}x_i\right).$$
\end{lemma}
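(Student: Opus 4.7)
The plan is a standard polarization argument combined with an inclusion-exclusion identity.

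First I would expand each term $B\bigl(\sum_{i\in I} x_i\bigr)$ using multilinearity of $A$. Since $B(y) = A(y,y,\ldots,y)$ and $A$ is $d$-linear,
\[
B\Bigl(\sum_{i\in I} x_i\Bigr) = A\Bigl(\sum_{i\in I} x_i,\ldots,\sum_{i\in I} x_i\Bigr) = \sum_{(j_1,\ldots,j_d)\in I^d} A(x_{j_1},\ldots,x_{j_d}).
\]
Note that the pairwise disjointness of the $x_i$ is precisely what guarantees $\sum_{i\in I}x_i \in \{0,1\}^n$, so $B$ is evaluated on an admissible input.

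Next I would interchange the order of summation, grouping terms by the sequence $(j_1,\ldots,j_d)\in[d]^d$:
\[
\sum_{I\in 2^{[d]}} (-1)^{d-|I|} B\Bigl(\sum_{i\in I} x_i\Bigr) = \sum_{(j_1,\ldots,j_d)\in[d]^d} A(x_{j_1},\ldots,x_{j_d}) \sum_{I\supseteq T} (-1)^{d-|I|},
\]
where $T=\{j_1,\ldots,j_d\}$. The inner sum is a standard inclusion-exclusion coefficient: writing $I = T \cup J$ with $J \subseteq [d]\setminus T$ gives
\[
\sum_{I\supseteq T} (-1)^{d-|I|} = (-1)^{d-|T|} \sum_{j=0}^{d-|T|}\binom{d-|T|}{j}(-1)^j,
\]
which equals $0$ whenever $|T|<d$ and equals $1$ when $T=[d]$. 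Hence only sequences $(j_1,\ldots,j_d)$ that are permutations of $[d]$ survive.

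Finally, I would invoke symmetry of $A$: for every permutation $\phi \in S_d$, $A(x_{\phi(1)},\ldots,x_{\phi(d)}) = A(x_1,\ldots,x_d)$, and there are $d!$ such permutations. Summing over them yields
\[
\sum_{I\in 2^{[d]}} (-1)^{d-|I|} B\Bigl(\sum_{i\in I} x_i\Bigr) =_p d!\cdot A(x_1,\ldots,x_d),
\]
and dividing by $d!$ (which is nonzero mod $p$ by the standing assumption $p>d!$) gives the claim. The only real subtlety is ensuring the divisibility by $d!$ is legal in $\Z_p$, which is precisely why the earlier preliminaries insist on $p>d!$; everything else is bookkeeping of the inclusion-exclusion.
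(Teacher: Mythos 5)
Your proof is correct and is essentially the paper's argument: the paper reduces by multilinearity and symmetry to the formal polarization identity $y_1\cdots y_d=\frac{1}{d!}\sum_{I}(-1)^{d-|I|}(\sum_{i\in I}y_i)^d$ and verifies it by multinomial expansion, where the coefficient computation is exactly your alternating sum $\sum_{I\supseteq T}(-1)^{d-|I|}$, which vanishes unless $T=[d]$. You simply carry out the same inclusion--exclusion bookkeeping directly on the expansion of $A$, and your remarks about disjointness and the invertibility of $d!$ modulo $p>d!$ match the paper's standing assumptions.
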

\begin{proof} Since
$$A(x_{1}+x'_{1},x_2,\ldots,x_d)=_pA(x_{1},x_2,\ldots,x_d)+A(x'_{1},x_2,\ldots,x_d)$$
and
$$A(x_{1},x_2,\ldots,x_d)=_pA(x_{\phi(1)},x_{\phi(2)},\ldots,x_{\phi(d)})$$
for any permutation $\phi$ on $[d]$, the result is analogous to the
fact that \begin{eqnarray}\label{y1yd} y_1y_2\cdots
y_d=_p\frac{1}{d!} \sum_{I\in 2^{[d]}}(-1)^{d-|I|}\left(\sum_{i\in
I}y_i\right)^d,
\end{eqnarray} for formal variables $y_1,\ldots,y_d$.
Now notice that
$$\left(\sum_{i\in I}y_i\right)^d=_p\sum_{ q_1+\cdots+q_d=d} {\chi}\left[\{i|q_i\not=0\}\subseteq I\right]{d\choose
q_1\ q_2\ \cdots q_d} y_1^{q_1}\cdots y_d^{q_d},$$ where
${\chi}[L]=1$ if the statement $L$ is true and $0$ otherwise.
Therefore, the coefficient of $y_1^{q_1}\cdots y_d^{q_d}$ in the
right hand side of (\ref{y1yd}) is
$$\sum_{I\in 2^{[d]}}(-1)^{d-|I|}{\chi}\left[\{i|q_i\not=0\}\subseteq
I\right]{d\choose q_1\ q_2\ \cdots q_d}$$
$$=_p{d\choose q_1\ q_2\ \cdots q_d}\sum_{I\in 2^{[d]}}(-1)^{d-|I|}{\chi}\left[\{i|q_i\not=0\}\subseteq
I\right].$$ Now if $\ell=|\{i|q_i\not=0\}|<d$ then
$$\sum_{I\in 2^{[d]}}(-1)^{d-|I|}{\chi}\left[\{i|q_i\not=0\}\subseteq
I\right]=_p\sum_{i=\ell}^{d}(-1)^{d-i}{d-\ell \choose
i-\ell}=_p\sum_{i=0}^{d-\ell}(-1)^{d-\ell-i}{d-\ell \choose i}=0.$$
If $\ell=|\{i|q_i\not=0\}|=d$ then $q_1=q_2=\cdots=q_d=1$ and
$$\sum_{I\in 2^{[d]}}(-1)^{d-|I|}{\chi}\left[\{i|q_i\not=0\}\subseteq
I\right]=_p1.$$ This implies the result.
\end{proof}

Let $G$ be a hypergraph of rank $d$ and $G^{(i)}$, $i\le d$, be the
sub-hypergraph of $G$ that contains all the edges in $G$ of size $i$
then
\begin{lemma}\label{lll}
If $x_1,\ldots,x_d\in \{0,1\}^n$ are pairwise disjoint then, we have
that $A^G_d(x_1,\ldots,x_d)= A^{G^{(d)}}_d(x_1,\ldots,x_d)$. In
particular, if $r(G)<d$ then $A^G_d(x_1,\ldots,x_d)=0$.
\end{lemma}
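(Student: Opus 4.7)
The plan is to unfold the definition of $A^G_d(x_1,\ldots,x_d)$ as a sum over index tuples, and use the pairwise disjointness of the $x_i$ to show that only tuples of dimension exactly $d$ contribute.

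First I would write
\[
A^G_d(x_1,\ldots,x_d) = \sum_{i_1,\ldots,i_d} A^G_{d(i_1,\ldots,i_d)}\, x_{1,i_1}\cdots x_{d,i_d}.
\]
Since each $x_k\in\{0,1\}^n$, the product $x_{1,i_1}\cdots x_{d,i_d}$ equals $1$ precisely when $i_k$ lies in the support of $x_k$ for every $k$. The hypothesis that the $x_k$ are pairwise disjoint means these supports are mutually disjoint, which forces $i_1,\ldots,i_d$ to be pairwise distinct, i.e.\ $|\{i_1,\ldots,i_d\}|=d$. Hence only entries of dimension exactly $d$ survive in the sum.

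Next I would invoke the definition of the adjacency matrix: an entry $A^G_{d(i_1,\ldots,i_d)}$ of dimension $d$ corresponds to the edge $e=\{i_1,\ldots,i_d\}$, and its value is $w^\star(e)/N(d,d)$, which by construction is exactly the entry of $A^{G^{(d)}}_d$ at the same position (since $G$ and $G^{(d)}$ agree on all edges of size $d$). Entries of $A^{G^{(d)}}_d$ of dimension less than $d$ are zero, and entries of dimension $d$ in $A^G_d$ not surviving in this sum correspond to tuples that are killed by $x_{1,i_1}\cdots x_{d,i_d}=0$, so in the restricted sum over pairwise distinct tuples $A^G_d$ and $A^{G^{(d)}}_d$ coincide. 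This gives
\[
A^G_d(x_1,\ldots,x_d) = A^{G^{(d)}}_d(x_1,\ldots,x_d).
\]

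For the ``in particular'' clause, if $r(G)<d$ then $G$ has no edges of size $d$, so $G^{(d)}$ is empty and $A^{G^{(d)}}_d=0^{\times_d n}$; thus the right-hand side vanishes. There is no real obstacle here — the only thing to be careful about is making the dimension-counting precise, namely that pairwise disjointness of the supports rules out any tuple with a repeated index, so contributions from lower-dimensional entries (corresponding to edges of size $<d$) are automatically killed.
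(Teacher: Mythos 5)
Your proof is correct and follows essentially the same route as the paper: expand the multilinear form $A^G_d(x_1,\ldots,x_d)$ as a sum over index tuples, use the pairwise disjointness of the supports to kill every tuple with a repeated index (hence every entry of dimension $<d$), and observe that the surviving dimension-$d$ terms, with value $w^\star(e)/N(d,d)$, coincide with those of $A^{G^{(d)}}_d$, so the two evaluations agree and vanish when $r(G)<d$.
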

\begin{proof} Since $x_1,\ldots,x_d\in \{0,1\}^n$ are
pairwise disjoint we have
\begin{eqnarray*}
A_d^G(x_1,\ldots,x_d)&=&\sum_{i_1=1}^{n_1}\cdots
\sum_{i_d=1}^{n_d}
\frac{w^\star(\{i_1,i_2,\ldots,i_d\})}{N(d,|\{i_1,i_2,\ldots,i_d\}|)}x_{1i_1}\cdots
x_{di_d}\\
&=&\sum_{|\{i_1,\ldots,i_d\}|=d}\frac{w^\star(\{i_1,i_2,\ldots,i_d\})}{N(d,d)}x_{1i_1}\cdots
x_{di_d}\\
&=& A_d^{G^{(d)}}(x_1,\ldots,x_d).
\end{eqnarray*}
Now when $r(G)<d$ then $G^{(d)}$ is an independent set (has no
edges) and $A_d^{G^{(d)}}=0$. Then $$A_d^G(x)=A_d^{G^{(d)}}(x)=0.$$
\end{proof}

We now prove
\begin{lemma}\label{ZT}
Let $\Phi_d=\{z^{(d)}_1,\ldots,z^{(d)}_{k_d}\}\subset (\{0,1\}^n)^d$
where for every $i$ the vectors $z^{(d)}_{i,1},\ldots,z^{(d)}_{i,d}$
are pairwise disjoint. If $\Phi_d$ is a zero test set for
${\mathcal{A}}^\star_{d,(d!)m}$ then
$$S^{\Phi_d} \stackrel{\Delta}{=} \left\{ S^{y_J} \left|\  y_J=\sum_{j\in J} z^{(d)}_{i,j},\ J\subset [d]\begin{array}{c}\
\\ \, \\ \,
\end{array}\right.\right\}$$
is a detecting set for ${\mathcal{G}}^\star_{d,m}$.
\end{lemma}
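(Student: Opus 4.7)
The plan is to take an arbitrary $G\in{\mathcal{G}}^\star_{d,m}$ and exhibit some $S^{y_J}\in S^{\Phi_d}$ with $Q_G(S^{y_J})\neq 0$, using the adjacency $d$-dimensional matrix $A_d^G$ as the bridge between the hypergraph world and the matrix world.

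First I would estimate $wt_d(A_d^G)$. By the definition of the adjacency matrix, each edge $e\in E$ of size $\ell\le d$ contributes $N(d,\ell)$ nonzero entries to $A_d^G$, all of dimension $\ell$. So the entries of dimension exactly $d$ come precisely from the edges of size $d$ (i.e.\ from $G^{(d)}$), and each such edge contributes $N(d,d)=d!$ such entries. Since $G$ has rank exactly $d$, it has at least one edge of size $d$, and at most $m$ edges total, so
\[
1\le d!\le wt_d(A_d^G)\le d!\cdot m = (d!)m.
\]
Therefore $A_d^G\in{\mathcal{A}}^\star_{d,(d!)m}$.

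Next, I invoke the zero test hypothesis on $\Phi_d$ applied to $A_d^G$: there exists an index $i$ such that
\[
A_d^G\bigl(z^{(d)}_{i,1},\ldots,z^{(d)}_{i,d}\bigr)\neq_p 0.
\]
Since by assumption the vectors $z^{(d)}_{i,1},\ldots,z^{(d)}_{i,d}$ are pairwise disjoint, I can apply Lemma~\ref{st} to rewrite this evaluation as
\[
A_d^G\bigl(z^{(d)}_{i,1},\ldots,z^{(d)}_{i,d}\bigr)=_p\frac{1}{d!}\sum_{J\subseteq[d]}(-1)^{d-|J|}B_d^G\!\left(\sum_{j\in J}z^{(d)}_{i,j}\right).
\]
Because the left-hand side is nonzero modulo $p$ (and $d!\not=_p 0$ since $p>d!$), at least one summand on the right must be nonzero.

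Finally, for each $J\subseteq[d]$ the vector $y_J=\sum_{j\in J}z^{(d)}_{i,j}$ lies in $\{0,1\}^n$ (the disjointness prevents overflow), so $y_J=x^{S^{y_J}}$ and by the algebraic identification in Section~\ref{ss},
\[
B_d^G(y_J)=_p Q_G\bigl(S^{y_J}\bigr).
\]
Hence some $S^{y_J}\in S^{\Phi_d}$ satisfies $Q_G(S^{y_J})\neq 0$, and $S^{\Phi_d}$ is a detecting set for ${\mathcal{G}}^\star_{d,m}$. The only mildly delicate step is the weight count in the first paragraph---keeping straight that only the $d$-sized edges contribute to $wt_d(A_d^G)$ (as captured by Lemma~\ref{lll}) is what forces the inflation factor $d!$ in the hypothesis; everything else is a direct chain from the zero test assumption through Lemma~\ref{st}.
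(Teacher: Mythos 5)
Your proof is correct and follows essentially the same route as the paper's: verify $A_d^G\in{\mathcal{A}}^\star_{d,(d!)m}$, apply the zero test hypothesis to get some $i$ with $A_d^G(z^{(d)}_i)\neq_p 0$, expand via Lemma~\ref{st} using pairwise disjointness, and conclude that some $B_d^G(y_J)=Q_G(S^{y_J})$ is nonzero. Your explicit weight count $wt_d(A_d^G)=d!\cdot|\{e:|e|=d\}|\le(d!)m$ is a welcome elaboration of the paper's terse assertion that $A_d^G\in{\mathcal{A}}^\star_{d,(d!)m}$, but it is not a different argument.
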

\begin{proof} Let $\Phi_d$ be a zero test set for ${\mathcal{A}}^\star_{d,(d!)m}$. Let $G\in {\mathcal{G}}^\star_{d,m}$. Then
$A^G_d\not=0$ and $A^G_d\in {\mathcal{A}}^\star_{d,(d!)m}$.
Therefore, for every $G\in {\mathcal{G}}^\star_{d,m}$ there is
$z^{(d)}_i$ such that $A^G_d(z^{(d)}_i)\not=0.$ By Lemma \ref{st},
$$A^G_d(z^{(d)}_i)=_p\frac{1}{d!} \sum_{J\in 2^{[d]}}(-1)^{d-|J|}B^G_d\left(\sum_{j\in
J}z^{(d)}_{i,j}\right)\not=0,$$ and therefore for some $J_0\subset
[d]$,
$$B^G_d\left(\sum_{j\in J_0}z^{(d)}_{i,j}\right)\not=0,$$
which implies that $Q_G\left(S^{y_{J_0}}\right)\not=0$ for
$y_{J_0}=\sum_{j\in J_0} z^{(d)}_{i,j}.$
\end{proof}

We now show

\begin{lemma} \label{ZpR} A detecting set for ${\mathcal{G}}_{d,m}$
over $\Z_p$ is a detecting set for ${\mathcal{G}}_{d,m}$ over $\R$.
\end{lemma}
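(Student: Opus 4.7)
The plan is to translate the detecting-set property into a linear-algebra statement about an integer matrix, and then use the fact that integer linear dependences survive reduction modulo~$p$. Let $M$ be the $k\times N$ incidence matrix, with $N=\sum_{i=1}^d\binom{n}{i}$, whose rows are indexed by the query sets $S_1,\ldots,S_k$, whose columns are indexed by potential hyperedges $e\subseteq V$ with $1\le|e|\le d$, and with $M_{i,e}=1$ iff $e\subseteq S_i$. For any field $\FF$, a weighted hypergraph $G$ with weights in $\FF$ is encoded by its extension $w^\star\in\FF^N$, and $Q_G(S_i)=(Mw^\star)_i$. The number of edges of $G$ equals the number of nonzero coordinates of $w^\star$, so ${\mathcal{S}}$ is a detecting set for ${\mathcal{G}}_{d,m}$ over $\FF$ iff no nonzero $w\in\FF^N$ with at most $m$ nonzero coordinates lies in the $\FF$-kernel of $M$; equivalently, every sub-matrix of $M$ with at most $m$ columns has trivial $\FF$-kernel.

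I would then argue the contrapositive: if ${\mathcal{S}}$ fails to detect over $\R$, then it also fails over $\Z_p$. Given a nonzero $w\in\R^N$ with at most $m$ nonzero entries and $Mw=0$, the columns of $M$ indexed by $\mathrm{supp}(w)$ are $\R$-linearly dependent. Because $M$ has integer entries, its $\R$- and $\mathbb{Q}$-ranks coincide, so those columns are also $\mathbb{Q}$-dependent. This yields a nonzero $w'\in\mathbb{Q}^N$ supported inside $\mathrm{supp}(w)$ with $Mw'=0$; clearing denominators and then dividing by the gcd of the nonzero entries produces a primitive integer vector $w''\in\Z^N$ with at most $m$ nonzero coordinates and $Mw''=0$.

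Reducing $w''$ modulo $p$ then gives $\bar w''\in\Z_p^N$ with $M\bar w''=0$. Primitivity forces at least one nonzero entry of $w''$ to be coprime to $p$, so $\bar w''\neq 0$; it corresponds to a weighted hypergraph in ${\mathcal{G}}_{d,m}$ over $\Z_p$ on which every query returns $0$, contradicting the $\Z_p$-detecting hypothesis. The only point to handle carefully is the descent from a real to a primitive integer dependence, which is standard for integer matrices; the rest is bookkeeping. The boundary case $p=\infty$ is vacuous since $\Z_\infty=\R$ by the paper's convention.
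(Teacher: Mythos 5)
Your proposal is correct and follows essentially the same route as the paper: both encode hypergraphs as weight vectors, reduce the detecting property to the statement that every $m$ columns of the $0$-$1$ incidence matrix $M$ are linearly independent, and transfer independence from $\Z_p$ to $\R$ using the integrality of $M$. Your rational-to-primitive-integer descent merely spells out the step the paper dispatches with the sentence ``since the entries of $M$ are zeros and ones, every $m$ columns in $M$ are linearly independent over $\R$,'' so there is nothing substantively different to flag.
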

\begin{proof} Consider a detecting set
${\mathcal{S}}=\{S_1,S_2,\cdots,S_k\}\subseteq 2^V$ for
${\mathcal{G}}_{d,m}$ over $\Z_p$. Consider a $k\times q$
 matrix $M$ where $$q=\sum_{i=0}^d {n\choose i}$$ that its columns
are labelled with sets in $2^{[n]}$ of size at most $d$ and for
every $S\subset [n]$ of size at most $d$ we have $M[i,S]=1$ if
$S\subseteq S_i$ and $0$ otherwise. Consider for every graph $G\in
{\mathcal{G}}_{d,m}$ a $q$-vector $v_G$ that its entries are
labelled with subsets of $[n]$ of size at most $d$ and
$v_G[S]=w^\star(S)$. The labels in $v_G$ are in the same order as
the labels of the columns of $M$. Then it is easy to see that
$$Mv_G=_p (Q_G(S_1),\ldots,Q_G(S_k))^T.$$
Since $Mv_G\not=_p0$ for every $v_G\in \Z_p^{q}$ of weight at least
one and at most $m$, every $m$ columns in $M$ are linearly
independent over $\Z_p$. Since the entries of $M$ are zeros and ones
every $m$ columns in $M$ are linearly independent over $\R$.
Therefore, $$Mv_G= (Q_G(S_1),\ldots,Q_G(S_k))^T\not=0,$$ for every
$v_G\in \R^{q}$ of weight at least $1$ and at most $m$.
\end{proof}

\subsection{Distributions}\label{dist}
In this subsection we give a distribution that will be used in this
paper.

The {\it uniform disjoint distribution $\Omega_{d,n}(x)$} over
$(\{0,1\}^n)^d$ is defined as
$$
\Omega_{d,n}(x) = \left\{\begin{tabular}{cl} $\frac{1}{(d+1)^{n}}$
& $x_1,\ldots,x_d$ is pairwise disjoint.\\
0 & otherwise.
\end{tabular}
\right.
$$

In order to choose a random vector $x$ according to the uniform
disjoint distribution, one can randomly independently uniformly
choose $n$ elements $w_1,w_2,\ldots,w_n$ where $w_i\in[d+1]$ and
define the following vector
$x=(x_1,x_2,\ldots,x_d)\in(\{0,1\}^n)^d$:
$$
x_{ji} = \left\{\begin{tabular}{cl} 1 & $j=w_i$ and $w_i \in [d]$ \\
0 & otherwise.
\end{tabular}\right.
$$
We call any index $k\in[n]$ such that $ x_{jk} = 0 $ for all $j\in
[d]$ a \emph{free index}. Let $\Gamma_{d,n}\subset (\{0,1\}^n)^d$
be the set of all pairwise disjoint $d$-tuple.

\ignore{  \item Distribution $\Psi^{(i)}_{d,n}(x)$.\\
  The process of choosing $x = (x_1,x_2,\ldots,x_d)$ according to the
distribution
  $\Psi^{(i)}_{d,n}$ is the following. First we choose
  $$(x_1,x_2,\ldots,x_{i-1},x_{i+1},\ldots,x_{d})$$ according to the
  distribution $\Omega_{d-1,n}$. Then, we randomly uniformly choose
  $y\in\{0,1\}^n$. We redefine $x_{j}$ where $j\in[n]\setminus\{i\}$ to be
  $$
  x_j = x_j \wedge y
  $$
We will call an index $k\in [n]$ such that $y_i = 0$ a \emph{free
index}. Now, we are remained with choosing $x_i$. For any free index
$k$, we have $x_k = 1$ with probability 1/2 and $x_k = 0$ with
probability 1/2. For all other indices $\ell$ we have $x_\ell = 0$.
Note that $x$ is a $d$-disjoint tuple.}

\subsection{Preliminary Results} In this section we prove,

\begin{lemma}\label{eliminate}
Let $A\in \FF^{\times_d n}\setminus\{0^{\times_d n}\}$ be an
adjacency $d$-dimensional matrix of a hypergraph~$G$ of rank $d$.
Let $x = (x_1,x_2,\ldots,x_d)\in (\{0,1\}^n)^d$ be a randomly chosen
$d$-tuple, that is chosen according to the distribution
$\Omega_{d,n}$. Then
$$
\Pr_{x\in\Omega_{d,n}}[A(x)= 0]\leq 1 - \frac{1}{(d+1)^d}.
$$
\begin{proof}
Let $e=\{i_1,\ldots,i_d\}$ be an edge of size $|e|=d$ and let
$x'_j=(x_{j,i_1},\ldots,x_{j,i_d})$. Consider
$\phi(x'_1,\ldots,x'_d)$ that is equal to $A(x)$ with some fixed
$x_{j,i}=\xi_{j,i}\in \{0,1\}$ for $i\not\in e$. Since $A(x)$
contains the monomial $M=x_{1,i_1}x_{2,i_2}\cdots x_{d,i_d}$ and no
other monomial in $A(x)$ contains it, $\phi$ contains monomial $M$
and therefore $$\phi(x'_1,\ldots,x'_d)\not\equiv 0.$$ If we
substitute $x_{j_1,i_{j_2}}=0$ in~$\phi$ for all $j_1\not= j_2$ we
still get a nonzero function $\phi'(x_{1,i_1},x_{2,i_2},\cdots,
x_{d,i_d})$ that contains $M$. Therefore, there is
$\xi=(\xi_{1i_1},\xi_{2i_2},\cdots, \xi_{di_d})\in \{0,1\}^d$ such
that $\phi'(\xi)\not=0$. The probability that
$(x_{1,i_1},x_{2,i_2},\cdots, x_{d,i_d})=\xi$ and
$x_{j_1,i_{j_2}}=0$ for all $j_1\not= j_2$ is $({1}/{d+1})^d.$ This
implies the result.
\end{proof}

\end{lemma}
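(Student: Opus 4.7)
The plan is to exploit the fact that $A$ has rank $d$ to identify a specific edge $e=\{i_1,\ldots,i_d\}$ of size exactly $d$ with $A_{i_1,\ldots,i_d}\neq 0$, and then to lower bound the probability that the random disjoint tuple $x\sim\Omega_{d,n}$ hits a ``good'' pattern on $e$. I would view $A(x_1,\ldots,x_d)$ as a multilinear polynomial in the $dn$ variables $\{x_{j,i}\}_{j\in[d],\,i\in[n]}$; the distinguished monomial $M=x_{1,i_1}x_{2,i_2}\cdots x_{d,i_d}$ appears with coefficient $A_{i_1,\ldots,i_d}\neq 0$, and since $|\{i_1,\ldots,i_d\}|=d$ forces $(k_1,\ldots,k_d)=(i_1,\ldots,i_d)$ to be the only index tuple producing $M$, no other term in the expansion contributes to this monomial.

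Next I would show that $M$ survives two rounds of restriction. First, fix any assignment $\xi_{j,i}\in\{0,1\}$ to the ``outside'' variables $\{x_{j,i}:i\notin e\}$; since $M$ involves none of these variables, the coefficient of $M$ in the restricted polynomial $\phi$ is still $A_{i_1,\ldots,i_d}\neq 0$, hence $\phi\not\equiv 0$. Second, impose the additional constraint $x_{j_1,i_{j_2}}=0$ for all $j_1\neq j_2$ in $[d]$; the monomial $M$ uses only variables of the form $x_{j,i_j}$, so its coefficient is again unchanged, and the resulting polynomial $\phi'(x_{1,i_1},\ldots,x_{d,i_d})$ is a nonzero multilinear polynomial in $d$ $\{0,1\}$-valued variables. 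Therefore there exists $\xi=(\xi_1,\ldots,\xi_d)\in\{0,1\}^d$ with $\phi'(\xi)\neq 0$.

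Finally I would translate this algebraic statement into a probability bound. Under $\Omega_{d,n}$ the labels $\{w_i\}_{i\in[n]}$ are independent uniform on $[d+1]$, so the coordinates at different indices are independent. Condition on any fixing of the labels $\{w_i\}_{i\notin e}$ (this determines the outside $\xi_{j,i}$ values, which then determines $\xi$ via the previous paragraph). The event that $(x_{1,i_1},\ldots,x_{d,i_d})=\xi$ and simultaneously $x_{j_1,i_{j_2}}=0$ for all $j_1\neq j_2$ amounts to requiring $w_{i_j}=j$ when $\xi_j=1$ and $w_{i_j}=d+1$ when $\xi_j=0$, for each $j\in[d]$. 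Each of these $d$ independent events has probability $1/(d+1)$, so the joint probability is $(1/(d+1))^d$ regardless of the outside configuration. Averaging over the outside labels yields $\Pr[A(x)\neq 0]\geq 1/(d+1)^d$, i.e. $\Pr[A(x)=0]\leq 1-1/(d+1)^d$.

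The only subtle point, which I flagged as the main obstacle, is verifying that no cancellation can destroy the coefficient of $M$ after the two restrictions: one must check that $M$ arises from exactly one edge in the expansion of $A(x)$ (which uses $|e|=d$ crucially, so that $(i_1,\ldots,i_d)$ admits no nontrivial coincidences among its entries), and that neither the substitution of outside variables nor the zeroing of off-diagonal edge variables merges other monomials into $M$. Once this is in hand, the probability computation is a direct consequence of the independence built into $\Omega_{d,n}$.
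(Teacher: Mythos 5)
Your proof is correct and follows essentially the same route as the paper: isolate a nonzero entry $A_{i_1,\ldots,i_d}$ of dimension $d$, observe that the monomial $M=x_{1,i_1}\cdots x_{d,i_d}$ survives fixing the outside variables and zeroing the off-diagonal edge variables, pick $\xi$ with $\phi'(\xi)\neq 0$, and note that the corresponding label pattern $w_{i_j}=j$ or $w_{i_j}=d+1$ occurs with probability exactly $(1/(d+1))^d$ independently of the outside labels. Your explicit conditioning on the outside labels (so that $\xi$ may depend on them) makes the final averaging step slightly more careful than the paper's phrasing, but the argument is the same.
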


We will also use the following two lemmas from \cite{BM1,BM2}.
\begin{lemma}\label{PrimeElim}
Let $a\in \Z_p^n$ be a non-zero vector, where $p>wt(a)$ is a prime
number. Then for a uniformly randomly chosen vector
$x\in\{0,1\}^n$ we have
$$
\Pr_x[a^T x =_p 0]\leq \frac{1}{wt(a)^{\beta}},
$$
where $\beta = \frac{1}{2 + \log 3 }=0.278943\cdots$.
\end{lemma}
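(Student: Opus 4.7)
I would argue by induction on $k = wt(a)$. After restricting attention to the support of $a$, we may assume $a \in (\Z_p^*)^k$ and $x$ is uniform on $\{0,1\}^k$. The base cases are by direct enumeration: $k=1$ gives probability $1/2 \le 1 = 1/1^\beta$, and $k=2$ gives at most $1/2 \le 1/2^\beta \approx 0.82$ (the worst case being $a_1 = -a_2$).

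The key combinatorial ingredient I would establish is a \emph{triple-spread lemma}: for any $b_1,b_2,b_3 \in \Z_p^*$ with $p$ an odd prime and any $v \in \Z_p$, the set $\{(y_1,y_2,y_3) \in \{0,1\}^3 : \sum_j b_j y_j =_p v\}$ has at most $3$ elements. To prove this, suppose for contradiction that $4$ or more tuples lie in the preimage of $v$. Projecting onto two of the three coordinates, say $(y_1,y_2)$, the projection must cover all of $\{0,1\}^2$; this forces the $4$-element multiset $\{0, b_1, b_2, b_1+b_2\}$ to fit inside $\{v, v-b_3\}$, hence to have at most $2$ distinct values mod $p$. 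A short case check (using $b_1, b_2 \in \Z_p^*$ and $p$ odd to rule out $b_1 = b_2$, $b_1 = -b_2$, etc.) shows this set always contains at least $3$ distinct elements, a contradiction. Thus $\Pr[\sum_j b_j y_j =_p v] \le 3/8$ for any three nonzero coefficients.

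For the inductive step, partition the $k$ nonzero positions of $a$ into $m = \lfloor k/3 \rfloor$ disjoint triples $T_1, \ldots, T_m$, setting $S_j = \sum_{i \in T_j} a_i x_i$. These are independent $\Z_p$-valued random variables with distributions $p_j$ satisfying $\|p_j\|_\infty \le 3/8$ by the triple-spread lemma. The target probability equals $(p_1 * p_2 * \cdots * p_m * q)(0)$ for a leftover convolution factor $q$, the convolution being taken on $\Z_p$. Since $12^\beta = 2$ (equivalently $\beta = \log_{12} 2$), the natural recurrence to aim for is $f(12k) \le f(k)/2$: combine four triples (i.e.\ $12$ coordinates) at a time, and argue that convolving four independent distributions each of $\ell^\infty$-norm at most $3/8$ halves the peak mass. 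Iterating this recurrence from the base cases yields $f(k) \le 1/k^\beta$, which is exactly the stated bound.

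\textbf{Main obstacle.} The triple-spread lemma itself is elementary. The real technical difficulty lies in the convolution/iteration step: a crude application of Young's inequality gives only $\|p*q\|_\infty \le \|p\|_\infty$, which yields no improvement over the static bound $3/8$. To extract polynomial decay one must instead carry $\ell^2$-norms or Fourier coefficients on $\Z_p$ through the convolution, tracking how their peaks shrink, and balance the bookkeeping so that precisely $12$ coordinates are required per factor-of-$2$ gain. Forcing the exponent $\log_{12} 2 = 1/(2+\log 3)$ to emerge from this balance is the main point of the calculation carried out in \cite{BM1, BM2}.
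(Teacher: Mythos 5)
The first thing to note is that the paper does not prove this lemma at all: it is imported verbatim from \cite{BM1,BM2} (``We will also use the following two lemmas from \cite{BM1,BM2}''), so your sketch has to stand on its own, and it does not --- it leaves exactly the hard step open. Your triple-spread lemma is fine (four solutions would force $\{0,b_1,b_2,b_1+b_2\}\subseteq\{v,v-b_3\}$, hence $b_1=b_2=w$ with $2w\in\{0,w\}$, impossible for odd $p$ and $w\neq 0$), the base cases are fine, and the identification $\beta=\log_{12}2$ is correct. But the engine of the argument --- passing from ``each block of three coordinates has peak probability at most $3/8$'' to the polynomial decay $wt(a)^{-\beta}$ --- is precisely what you do not supply. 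You concede that Young's inequality gives no gain and then refer the crux ``to the calculation carried out in \cite{BM1,BM2}'', which is circular: those are the papers whose lemma you are being asked to prove, and the present paper cites it for the same reason.

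Worse, the specific mechanism you propose for the recurrence $f(12k)\le f(k)/2$ --- that convolving four independent distributions, each of $\ell^\infty$-norm at most $3/8$, halves the peak mass --- is false. Take the triple of coefficients $(1,1,-1)$: the distribution of $y_1+y_2-y_3$ with $y_i$ uniform in $\{0,1\}$ has peak exactly $3/8$, yet the convolution of four independent copies (i.e.\ the $12$-coordinate vector with eight $+1$'s and four $-1$'s) satisfies $\Pr[a^Tx=2]=\binom{12}{6}2^{-12}\approx 0.2256>3/16$, since the relevant generating function is $x^{-4}(1+x)^{12}/2^{12}$. So $\ell^\infty$ bounds on the blocks alone cannot drive the iteration; one must carry finer information through the convolution (second moments or Fourier mass of each block, or how many distinct values each block spreads over), and arranging that bookkeeping so that the exponent $1/(2+\log 3)$ emerges is the entire content of the lemma. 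As it stands, your proposal establishes the base cases and a correct auxiliary counting fact, but not the statement.
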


Let $\iota$ be a function on non-negative integers defined as
follows: $\iota(0)=1$ and $\iota(i)=i$ for $i>0$.

\begin{lemma} \label{inequality}
Let $m_1,m_2,\ldots,m_t$ be integers in $[m]\cup\{0\}$ such that
$m_1+m_2+\cdots+m_t=\ell\ge t.$ Then $\prod_{i=0}^t \iota(m_i)\ge
m^{\lfloor (\ell-t)/(m-1)\rfloor}.$
\end{lemma}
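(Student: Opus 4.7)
The plan is to use an exchange (``smoothing'') argument to reduce to a canonical minimizing configuration and then verify the inequality on that configuration directly. The key intuition is that the quantity $\ell-t$ represents ``excess mass'' above the all-ones baseline, and the cheapest way to distribute this excess (in terms of $\prod \iota(m_i)$) is to pack it into buckets of capacity $m-1$, i.e.\ to turn $1$'s into $m$'s.

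First I would introduce three local moves on a configuration $(m_1,\ldots,m_t)$, each preserving the sum $\ell$ and the range $[0,m]$:
(i) if some $m_i=0$ and some $m_j\in[2,m-1]$, replace them with $(1,m_j-1)$;
(ii) if some $m_i=0$ and some $m_j=m$, replace them with $(1,m-1)$;
(iii) if $m_i,m_j\in[2,m-1]$ with $m_i\le m_j$, replace them with $(m_i-1,m_j+1)$.
A short calculation using the definition of $\iota$ shows that each move strictly decreases $\prod \iota(m_i)$. Since this product is a positive integer, the process terminates. At termination no move applies, which forces the configuration to have (a) at most one coordinate in $\{2,\ldots,m-1\}$, (b) no zero coexisting with a value in $[2,m-1]$ or with $m$. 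A brief argument using $\ell\ge t$ rules out the remaining case in which zeros coexist only with $1$'s (it would force $\ell=t$ and in fact all ones).

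Thus the local minimum has the canonical form: $s$ coordinates equal to $m$, possibly one coordinate equal to some $r\in[2,m-1]$, and the remaining coordinates equal to $1$. Writing the sum constraint $s(m-1) + c(r-1) + t = \ell$ with $c\in\{0,1\}$, the leftover $c(r-1)\in[0,m-2]$ is strictly less than $m-1$, so
\[
s \;=\; \left\lfloor\frac{\ell-t}{m-1}\right\rfloor.
\]
The product at this configuration equals $m^s$ when $c=0$ and $m^s\cdot r\ge m^s$ when $c=1$, in either case giving the desired lower bound. Since the exchange process only decreased the product, the original configuration also satisfies $\prod \iota(m_i)\ge m^{\lfloor(\ell-t)/(m-1)\rfloor}$.

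The main (and really the only) obstacle is bookkeeping: verifying that each of the three moves is valid (values stay in $[0,m]$) and strictly decreases the product, and that the stable configurations are exactly those described. There is no deep ingredient — the statement is essentially a discrete convexity/packing fact, and the exchange argument is the natural way to surface it.
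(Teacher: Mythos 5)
Your exchange argument is correct: each of the three moves preserves the multiset size $t$, the sum $\ell$, and the range $[0,m]$, and strictly decreases the product ($m_j\mapsto m_j-1$ with a $0$ turning into a $1$ costs a factor $\frac{m_j-1}{m_j}<1$; and $(m_i-1)(m_j+1)=m_im_j+m_i-m_j-1<m_im_j$ when $2\le m_i\le m_j\le m-1$), so the integer-valued product terminates at a stable configuration, which you correctly identify as $s$ copies of $m$, at most one value $r\in[2,m-1]$, and ones, giving product $m^sr^c\ge m^{\lfloor(\ell-t)/(m-1)\rfloor}$ since $c(r-1)<m-1$. One small fix: your parenthetical reason for excluding a stable configuration containing a zero alongside only $0$'s and $1$'s is off — such a configuration has sum at most $t-1<t\le\ell$, so it simply cannot occur; it does not ``force $\ell=t$.'' Note also that the lemma implicitly needs $m\ge 2$ (otherwise $\lfloor(\ell-t)/(m-1)\rfloor$ is undefined), and the product index $\prod_{i=0}^t$ is a typo for $\prod_{i=1}^t$. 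As for comparison with the paper: this paper does not prove the lemma at all — it is quoted from \cite{BM1,BM2} — so there is no in-paper argument to measure yours against; your smoothing/exchange proof is a valid, elementary, self-contained justification of the statement as used here.
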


\section{Reconstructing Hypergraphs}
In this section we prove,

\begin{theorem}\label{Th1} There is a search set for ${\mathcal{G}}_{d,m}$ over
$\R$ of size $ k = O\left(\frac{m \log n}{\log m}\right). $
\end{theorem}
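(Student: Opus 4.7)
The proof combines several reductions already laid out in the preliminaries with a probabilistic construction. First, by Lemma~\ref{dtos} a detecting set for $\G_{d,2m}$ yields a search set for $\G_{d,m}$, and by Lemma~\ref{ZpR} it suffices to build it over $\Z_p$ for some prime $p$ exceeding both $n$ and $d!$. Decomposing hypergraphs by their exact rank and noting that $d=O(1)$, I need only construct, for each $r\in[d]$, a detecting set for $\G^\star_{r,2m}$; the union of these $d$ sets will then do. Lemma~\ref{ZT} reduces each such task to exhibiting a zero test set $\Phi_r\subseteq \Gamma_{r,n}$ for $\A^\star_{r,(r!)\cdot 2m}$ of size $O(m\log n/\log m)$, since the induced detecting set $S^{\Phi_r}$ has size $2^r|\Phi_r|=O(|\Phi_r|)$.

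I would construct $\Phi_r$ by the probabilistic method: draw $k=Cm\log n/\log m$ tuples i.i.d.\ from the uniform disjoint distribution $\Omega_{r,n}$, for a large constant $C$. Set $m'=(r!)\cdot 2m=O(m)$. A symmetric $r$-dimensional matrix in $\A^\star_{r,m'}$ with top-rank weight $t\le m'$ is specified by $t$ unordered $r$-subsets of $[n]$ together with their $\Z_p$-weights, so $|\A^\star_{r,m'}|\le n^{rm'}p^{m'}$, and $\log|\A^\star_{r,m'}|=O(m\log n)$ once $p=O(n)$. A union bound shows that the $k$ sampled tuples fail to be a zero test set with probability at most
$$
|\A^\star_{r,m'}|\cdot \max_{0\ne A\in \A^\star_{r,m'}}\Pr_{x\sim\Omega_{r,n}}[A(x)=_p 0]^k.
$$
It therefore suffices to establish a per-matrix estimate of the form $\Pr_{x\sim\Omega_{r,n}}[A(x)=_p 0]\le 1/m^\gamma$ for some constant $\gamma=\gamma(r)>0$: the exponent $\gamma k\log m$ will then dominate $\log|\A^\star_{r,m'}|$, making the union-bound probability $o(1)$ and witnessing the existence of the desired $\Phi_r$.

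The heart of the proof is this per-matrix estimate, which I would prove by induction on $r$, peeling off the coordinate directions one at a time. Conditioning on $x_2,\ldots,x_r$ and using Lemma~\ref{lll} to isolate the rank-$r$ part of $A$, the value $A(x)$ equals $v\cdot x_1$, where $v=A(\cdot,x_2,\ldots,x_r)$ and, under $\Omega_{r,n}$, only indices that are \emph{free} with respect to $x_2,\ldots,x_r$ contribute, with $x_1$ restricted to these indices being an independent uniform $\{0,1\}$-vector. Whenever $v|_{\mathrm{free}}$ is nonzero, Lemma~\ref{PrimeElim} bounds the conditional failure by $1/\mathrm{wt}(v|_{\mathrm{free}})^\beta$. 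The induction step thus reduces to showing that, with only polynomially small failure probability, the random restriction preserves a polynomially large weight: a symmetric tensor of top-rank weight $t\ge 1$ cannot concentrate its mass on only a few slices, and this is precisely where Lemma~\ref{inequality} enters, lower bounding the product (hence the typical size) of slice weights and ruling out the pathological choices of $x_2,\ldots,x_r$ that would collapse $A$ to a low-weight residual. The base case $r=1$ is immediate from Lemma~\ref{PrimeElim} applied directly to the weight-$t$ vector. This inductive weight-propagation step is the main obstacle, since one must simultaneously track the combinatorial structure of $\Omega_{r,n}$ (in particular, the interplay of the ``free'' indices with the dimension reduction) and the symmetric structure of $A$ through every peel. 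Once that estimate is in place, combining it with the reductions and the union bound above yields the claimed $O(m\log n/\log m)$ search set for $\G_{d,m}$ over $\R$.
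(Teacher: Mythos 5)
Your chain of reductions (Lemma~\ref{dtos}, Lemma~\ref{ZpR}, Lemma~\ref{ZT}, the rank decomposition via Lemma~\ref{lll}) matches the paper, but the core of your plan rests on a claim that is false: there is no uniform per-matrix, per-query estimate $\Pr_{x\sim\Omega_{r,n}}[A(x)=_p 0]\le m^{-\gamma}$ for all $A\in\A^\star_{r,m'}$. For a matrix with a single nonzero top-dimensional entry the success probability is only the constant $r!/(r+1)^r$, and the problem does not go away at high weight either: take the ``star'' hypergraph whose $m$ edges all contain a common vertex $v$. With probability at least $1/(r+1)$ the vertex $v$ is free under $\Omega_{r,n}$, and then $A(x)=0$; so the failure probability is bounded below by a constant even though $wt(A)=m$. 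Consequently your union bound over the $n^{\Theta(m)}$-size class cannot be closed with i.i.d.\ samples from $\Omega_{r,n}$ and a per-query bound, and the inductive ``weight-propagation'' step you propose (that a random restriction preserves polynomially large weight except with polynomially small probability) is false for exactly such matrices, since the restriction collapses the whole tensor with constant probability. Lemma~\ref{inequality} also does not play the role you assign it (slice spread of a symmetric tensor); in the paper it converts a lower bound on a \emph{sum} of residual weights across many queries into a lower bound on their \emph{product}.

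The paper's proof of Lemma~\ref{l9} circumvents this in two ways you would need to reproduce. First, it splits the class by weight: matrices with $wt(A)\le m/\log m$ form a class of size only $n^{O(m/\log m)}$, so there the constant success probability of Lemma~\ref{eliminate} already suffices for the union bound. Second, for the remaining (heavy) matrices it does not bound each query separately but amortizes over the whole query set: it first fixes, by a Chernoff/union argument over the \emph{small} class of low-weight $(d-1)$-dimensional slices (Lemma~\ref{smalldim}), a deterministic multiset $Z=\{z_1,\ldots,z_{k_2}\}$ of $(d-1)$-tuples such that for every heavy $A$ the total residual weight $\sum_i wt(v_i*(u*A(\cdot,z_i)))$ is at least $qk_2/(2d^d)$ with $q\approx (m/\log m)^{1/d}$; Lemma~\ref{inequality} then gives $\prod_i \iota(\cdot)\ge m^{c_1k_2}$, and since the final coordinates $x_i$ are filled in with independent random bits on the free indices, Lemma~\ref{PrimeElim} yields an overall failure probability $m^{-c_1\beta k_2}=n^{-\Omega(Cm)}$, which beats the $n^{O(m)}$ union bound. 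Without this case split and the two-stage, amortized construction, your argument has a genuine gap at its central step.
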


\begin{theorem}\label{Th2} There is a search set for ${\mathcal{G}'}_{d,m}$ over
$\R$ of size $ k = O\left(\frac{m \log \frac{n^d}{m}}{\log
m}\right)$, where ${\mathcal{G}'}_{d,m}$ denotes the set of all
weighted hypergraphs over $\R$ of rank at most $d$, at most $m$
edges and weights that are integers bounded by $w = poly(n^d/m)$.
\end{theorem}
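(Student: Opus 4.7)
The plan is to follow the random construction used for Theorem~\ref{Th1} and refine the union bound, exploiting the fact that when edge weights are integers with $|w(e)|\le w=poly(n^d/m)$ the relevant class of adjacency matrices is finite. Fix a prime $p>d!$ with $p=poly(n,m,w)$. By Lemma~\ref{ZpR} it suffices to construct a detecting set over $\Z_p$; by Lemma~\ref{dtos} it is enough to build a detecting set for ${\mathcal{G}'}_{d,2m}$; and by Lemma~\ref{ZT}, applied separately for each rank $r=1,\ldots,d$, it suffices to build a zero test set $\Phi_r$ for the subclass of ${\mathcal{A}}^\star_{r,2(r!)m}$ consisting of adjacency matrices of rank-$r$ hypergraphs with at most $2m$ edges and integer weights of absolute value at most $w$. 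The final search set is $\bigcup_{r=1}^d S^{\Phi_r}$.

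Construct each $\Phi_r$ probabilistically: sample $k_r$ tuples $x^{(1)},\ldots,x^{(k_r)}\in(\{0,1\}^n)^r$ i.i.d.\ from $\Omega_{r,n}$, so every sampled tuple is automatically pairwise disjoint. By the union bound, the probability that $\Phi_r$ fails to be a zero test set is at most $N_r\cdot q_r^{k_r}$, where $N_r$ is the size of the restricted class and $q_r=\sup_A\Pr_{x\sim\Omega_{r,n}}[A(x)=_p 0]$. Bounding $N_r$ by choosing at most $2m$ edges from the $\binom{n}{r}$ possible $r$-subsets of $[n]$ and a nonzero integer weight in $[-w,w]$ for each gives
$$\log N_r = O\bigl(m\log(n^r/m)+m\log w\bigr)=O\bigl(m\log(n^d/m)\bigr),$$
using $r\le d$ and $w=poly(n^d/m)$.

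The key remaining ingredient, inherited from the proof of Theorem~\ref{Th1}, is a per-query bound $q_r\le 1/m^{\beta'}$ for some absolute constant $\beta'>0$. The idea is to reveal $x_2,\ldots,x_r$ first and set $v:=A(\cdot,x_2,\ldots,x_r)\in\Z_p^n$, so that $A(x)=v^Tx_1$; conditioned on $x_2,\ldots,x_r$, the restriction $x_1|_F$ to the set $F$ of free indices is uniform on $\{0,1\}^F$, and Lemma~\ref{PrimeElim} gives $\Pr[v^Tx_1=_p 0 \mid v|_F\neq 0]\le 1/wt(v|_F)^\beta$. An induction on $r$, using Lemma~\ref{inequality} to control the multiplicities with which the $\Theta(m)$ nonzero dimension-$r$ entries of $A$ project onto the random partition of $[n]$ induced by the $S^{x_j}$'s, shows that $wt(v|_F)\ge m^{\alpha}$ except with probability $1/m^{\gamma}$ for some constants $\alpha,\gamma>0$, and therefore $q_r\le 1/m^{\gamma}+1/m^{\alpha\beta}=1/m^{\Omega(1)}$. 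Granting this, choose $k_r=c\cdot m\log(n^d/m)/\log m$ for a sufficiently large constant $c$; then $N_r\cdot q_r^{k_r}<1$, so a valid $\Phi_r$ exists, and the union over $r=1,\ldots,d$ yields a search set of size $O(m\log(n^d/m)/\log m)$. The main obstacle is this per-query probability bound: once Theorem~\ref{Th1} supplies it, Theorem~\ref{Th2} follows from the tighter counting above that exploits the bounded-weight assumption.
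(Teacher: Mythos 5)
The reduction chain you set up (Lemma~\ref{ZpR}, Lemma~\ref{dtos}, Lemma~\ref{ZT} applied per rank) and the refined counting $\log N_r=O(m\log(n^d/m))$ for bounded integer weights are exactly the right adaptations, and they match what the paper intends for Theorem~\ref{Th2}. The gap is in the probabilistic core: the uniform per-query bound $q_r=\sup_A\Pr_{x\sim\Omega_{r,n}}[A(x)=_p0]\le 1/m^{\beta'}$ that you want to ``inherit from Theorem~\ref{Th1}'' is false, and Theorem~\ref{Th1}'s proof never establishes anything of the sort. A hypergraph with a single edge $e=\{i_1,\ldots,i_r\}$ lies in your restricted class, and for it $A(x)\ne 0$ exactly when the $r$ vertices of $e$ are assigned bijectively to the $r$ parts of the random partition, so $\Pr[A(x)=_p0]=1-r!/(r+1)^r$, a constant independent of $m$. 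The failure is not confined to sparse matrices: for a ``star'' whose $\Theta(m)$ edges all contain a common $(r-1)$-set, the slice $v=A(\cdot,x_2,\ldots,x_r)$ vanishes identically unless the core is split correctly across $x_2,\ldots,x_r$, which again happens only with constant probability; so your claimed concentration ``$wt(v|_F)\ge m^{\alpha}$ except with probability $1/m^{\gamma}$'' is also false. With the best uniform bound actually available (Lemma~\ref{eliminate}: $q_r\le 1-1/(r+1)^r$), your inequality $N_rq_r^{k_r}<1$ forces $k_r=\Omega(m\log(n^d/m))$, which loses precisely the $\log m$ factor the theorem is about.

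What the paper does instead (for Theorem~\ref{Th1}, and what must be redone here) is a two-case analysis that your single union bound collapses. Matrices with $wt(A)\le m/\log m$ are handled with only the constant per-query guarantee of Lemma~\ref{eliminate}, but against a union bound whose logarithm is smaller by a $\log m$ factor; matrices with $wt(A)>m/\log m$ and a large projection $I_j$ are handled by queries of the special form $(x_i,z_i)$, where the $(d-1)$-tuples $z_i$ are \emph{fixed in advance} by Lemma~\ref{smalldim} (they are not i.i.d.\ $\Omega_{r,n}$ samples), and the smallness comes from an \emph{aggregate} bound over the whole batch, $\prod_i\iota(wt(v_i*(u*A(\cdot,z_i))))^{-\beta}\le m^{-c_1\beta k_2}$, obtained by combining Lemma~\ref{PrimeElim} with Lemma~\ref{inequality}; individual factors in that product can be $1$. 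To prove Theorem~\ref{Th2} you should reproduce this case split verbatim and only change the counting, replacing $p^{m/\log m}$ and $p^{m}$ in the two union bounds by $w^{m/\log m}$ and $w^{m}$ and using $\binom{n^d}{m}\le(en^d/m)^m$, which is where your (correct) observation $\log N=O(m\log(n^d/m))$ enters and yields $k=O\bigl(m\log(n^d/m)/\log m\bigr)$. As written, your argument does not go through.
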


\begin{proof} We give the proof of Theorem \ref{Th1}.
The proof of Theorem \ref{Th2} is similar. More details in the full
paper.

Let $m<p<2m$ be a prime number. Suppose there is a zero test set
from $\Gamma_{d,n}$ for ${\mathcal{A}}^\star_{d,m}$ over $\Z_p$ of
size $T(n,m,d)$. By Lemma \ref{ZT}, there is a detecting set for
${\mathcal{G}}^\star_{d,m}$ over $\Z_p$ of size $2^dT(n,(d!)m,d)$.
Therefore, by Lemma \ref{lll}, there is a detecting set for
${\mathcal{G}}_{d,m}$ over $\Z_p$ of size
$T'(n,m,d)=\sum_{\ell=1}^d2^\ell T(n,(\ell!)m,\ell).$ By Lemma
\ref{ZpR}, there is a detecting set for ${\mathcal{G}}_{d,m}$ over
$\R$ of size $T'(n,m,d)$. Finally, by Lemma \ref{dtos}, there is a
search set for ${\mathcal{G}}_{d,m}$ over $\R$ of size $T'(n,2m,d)$.
Now for constant $d$, if
\begin{eqnarray}\label{enough}
T(n,m,d)=O\left(\frac{m \log n}{\log m}\right),
\end{eqnarray} then $T'(n,2m,d)=O(T(n,m,d))$. Therefore it is
enough to prove the following.

\begin{lemma}\label{l9}
Let $p$ be a prime number such that $m<p<2m$. There exists a set
$S=\{x_1 , x_2 ,\ldots, x_k \}\subseteq (\{0,1\}^n)^d $ where
$x_i=(x_{i,1},\ldots,x_{i,d})\in \Gamma_{d,n}$ for $i\in[k]$ and
$$
k = O\left(\frac{m \log n}{\log m}\right),
$$
such that: for every $d$-dimensional matrix $A\in\Z_p^{\times_d n}
\setminus \{0^{\times_d n}\}$ with $1\le wt_d(A)\leq m$ there
exists an $i$ such that
$
A(x_i) \neq_p 0.
$
\end{lemma}
\begin{proof}
Since $wt_d(A)>1$ the matrix $A$ has at least one nonzero entry of
dimension $d$. We will assume that all the entries of dimension
less than $d$ are zero, that is, $wt(A)=wt_d(A)$. This is because,
by Lemma \ref{lll}, the entries of dimension less than $d$ have no
effect when the vectors $x_i\in \Gamma_{d,n}$.

We divide the set of such matrices $ \mathcal{A} = \{A\, | \,
A\in\Z_p^{\times_d n}\setminus \{0^{\times_d n}\}\ \mathrm{and}\
wt(A)\leq m \} $ into $d+1$ (non-disjoint) sets:
\begin{itemize}
  \item $\mathcal{A}_0$: The set of all non-zero matrices $A\in \Z_p^{\times_d n}$ such
that $wt(A)\leq m/\log m$.
  \item $\mathcal{A}_j$ for $ j = 1,\ldots,d$: The set of all non-zero matrices
$A\in\Z_p^{\times_d n}$ such that $m\geq wt(A) > m/\log m$ and
there are at least
  $$
  \left(\frac{m}{\log m}\right)^{1/d}
  $$
  non-zero elements in
  $
  I_j = \{i_j |\exists (i_1,i_2,\ldots,i_{j-1},i_{j+1},\ldots,i_d):
A_{i_1,i_2,\ldots,i_d}\neq 0 \}.
  $
\end{itemize}
Note that $I =
\{(i_1,i_2,\ldots,i_d)|A_{i_1,i_2,\ldots,i_d}\not=0\}\subseteq
I_1\times I_2\times \cdots \times I_d$ and therefore either $I=
wt(A)\leq m/\log m$ or there is $j$ such that $|I_j|>(m/\log
m)^{1/d}$. Therefore, $\mathcal{A} = \mathcal{A}_0 \cup
\mathcal{A}_1\cup \cdots\cup \mathcal{A}_d$.

Using the probabilistic method, we give $d+1$ sets of pairwise
disjoint tuples of vectors $S_0,S_1,\ldots,S_d$ such that for every
$j\in\{0\}\cup [d]$ and $A\in\mathcal{A}_j$ there exists a $d$-tuple
$x$ in $S_j$ such that $A(x)\neq 0$ and
$$
|S_0| + |S_1| + \cdots + |S_d| = O\left(\frac{m\log n}{\log m}\right).
$$

\noindent \textbf{Case 1}: $A\in \mathcal{A}_0$: For a random
$d$-tuple $x$, chosen according to the distribution $\Omega_{d,n}$
we have that
$$
\Pr_x [A(x) =_p 0 ] \leq 1 - \frac{1}{(d+1)^d}.
$$
If we randomly choose $$k_1=\frac{cm\log n}{\log m}$$ $d$-tuples,
$x_1,\ldots,x_{k_1}$, according to the distribution
$\Omega_{d,n}$, then the probability that $A(x_i)= 0$ for all
$i\in[k_1]$ is
$$
\Pr [\forall i\in[k_1]: A(x_i) =_p 0 ] \leq \left(1 -
\frac{1}{(d+1)^d}\right)^{k_1}.
$$
Therefore, by union bound, the probability that there exists a
matrix $A\in\mathcal{A}_0$ such that $A(x_i)= 0$ for all $i\in[k_1]$
is
\begin{eqnarray*}
\Pr [\exists A\in\mathcal{A}_0, \forall i\in[k_1]: A(x_i) =_p 0 ]
&\leq& \binom{n^d}{\frac{m}{\log m}}p^{\frac{m}{\log m}}\left(1 -
\frac{1}{(d+1)^d}\right)^{\frac{cm\log n}{\log m}}\\ &<&
n^{d\frac{m}{\log m}}n^{\frac{m}{\log m}}n^{-\frac{c'cm}{\log m}}
<1,
\end{eqnarray*}
for some constant $c$. This implies the result.
$$ $$

\noindent \textbf{Case2}: $A\in \mathcal{A}_j$ where $j=1,\ldots,d$:
We will assume w.l.o.g that $j=1$. We first prove the following
lemma
\begin{lemma}\label{smalldim}
Let $U\subseteq \Z_p^{\times_{d-1}n}$ be the set of all
$d-1$-dimensional matrices with weight smaller than $m^{d/(d+1)}$.
For $A\in U$ let $\Upsilon(A)\subseteq[n]$ be following set
$$
\Upsilon(A)=\{j\, | \,\exists A_{i_1,i_2,\ldots,i_{d-1}}\neq 0 \
\mathrm{and}\  j\not\in \{i_1,i_2,\ldots,i_{d-1}\} \}.
$$
Define $ Q = \{(A,j)\, | \, A\in U \ \mathrm{and}\ j\in\Upsilon(A)
\}. $ Then, there is a constant $c_0$ such that for every $C>c_0$
and
$$
k_2 = C\frac{m\log n}{\log m}
$$
there exists a multi-set of $d-1$-tuples of (0,1)-vectors $Z=
\{z_1,z_2,\ldots,z_{k_2}\}\subseteq (\{0,1\}^n)^{d-1}$ such that
for every $(A,j)\in Q$ the size of the set
$$
Z_{(A,j)} = \{ i\, | A(z_i) \neq 0 \ \mathrm{and}\ j\ \mathrm{is\ a\
free\ index} \}
$$
is at least $\frac{k_2}{2d^{d}}.$
\end{lemma}

\begin{proof}
Let $z_i = (z_{i,1},z_{i,2},\ldots,z_{i,d-1})\in(\{0,1\}^n)^{d-1}$
be random $d-1$-tuple of $(0,1)$-vector  chosen according to the
distribution $\Omega_{d-1,n}$. For $(A,j)\in Q$, and by Lemma
\ref{eliminate}, we have
\begin{eqnarray*}
\Pr_{z_i\in \Omega_{d-1,n}}[A(z_i)\neq 0\ \mathrm{and}\ j \
\mathrm{is\ a\ free}] = \Pr[j \ \mathrm{is\ free}]\Pr[A(z_i)\neq
0|  j\mathrm{\ is\ free}] \geq
\frac{1}{d}\cdot\frac{1}{d^{d-1}}=\frac{1}{d^d}.
\end{eqnarray*}

Therefore, the expected size of $Z_{(A,j)}$ is greater than
$\frac{k_2}{d^d}$. By Chernoff bound, if we randomly choose all
$z_i$, $i\in[k_2]$ according to the distribution $\Omega_{d-1,n}$,
then, we have
$$
\Pr\left[|Z_{(A,j)}|\leq \frac{k_2}{2d^d}\right]\leq
e^{\frac{-k_2}{8 d^d }}.
$$
Thus, the probability that there exists $(A,j)\in Q$ such that
$|Z_{(A,j)}| \leq \frac{k_2}{2d^d}$ is
\begin{eqnarray*}
\Pr\left[\exists (A,j)\in Q: |Z_{(A,j)}|\leq
\frac{k_2}{2d^d}\right]&\leq& \frac{|Q|}{e^{\frac{-k_2}{8 d^d }}}
\leq \frac{|U \times [n]|}{e^\frac{-k_2}{8 d^d }} \leq
\frac{n\binom{n^{d-1}}{m^{d/(d+1)}}p^{m^{d/(d+1)}}}{e^{\frac{Cm\log
n}{8d^d\log m}}}\\
&\le&\frac{n\binom{n^{d-1}}{m^{d/(d+1)}}n^{m^{d/(d+1)}}}{n^{\frac{C(\log
e)m}{8d^d\log m}}}
\le\frac{n^{O\left(m^{d/(d+1)}\right)}}{n^{\frac{Cc'm}{\log m}}}
 <1,
\end{eqnarray*}
for large enough $C$. This implies the result.
\end{proof}
Now, Let $U$ and $Q$ be the sets we defined in Lemma \ref{smalldim}.
Let $A\in\mathcal{A}_1$. Since $wt(A)\leq m$ there are at most
$m^{1/(d+1)}$ $d-1$-dimensional matrices
$(A_{i_1,i_2,\ldots,i_d})_{i_1=j,i_2,\ldots,i_d}$ with weight
greater than $m^{d/(d+1)}$. Therefore, there is at least
$$
q = \left(\frac{m}{\log m}\right)^{1/d} - m^{1/(d+1)}
$$
indices $j$ such that
$(A_{i_1,i_2,\ldots,i_d})_{i_1=j,i_2,\ldots,i_d} \in U$. Let $U'$
contain any $q$ indices such that
$(A_{i_1,i_2,\ldots,i_d})_{i_1=j,i_2,\ldots,i_d} \in U$. Let $A_U$
be the matrix $$(A_{i_1,i_2,\ldots,i_d})_{i_1\in U',i_2,\ldots,i_d}
.$$ Let $z_1,z_2,\ldots,z_{k_2}\in (\{0,1\}^n)^{d-1}$ be the set we
proved its existence in Lemma~\ref{smalldim}. We now choose $x_i\in
\{0,1\}^n$, $i\in[k_2]$ in the following way: Take $z_i$. For every
free index $j$, choose $x_{ij}$ to be ``1'' with probability 1/2 and
``0'' with probability 1/2 (independently for every~$j$). All other
entries in $x_i$ are zero, that is, all entries that correspond to
non-free index $j$ in $z_i$ are zero. Let $u\in \{0,1\}^n$ be a
vector where $u_j = 1$ if $j\in U'$ and zero otherwise. Also, for a
$d-1$-tuple $z_i$ let $v_i\in \{0,1\}^n$ be the vector where
$v_{ij}=1$ if $j$ is a free index in $z_i$ and $v_{ij}=0$ otherwise.
By Lemma \ref{PrimeElim} we have that
\begin{eqnarray}\label{q2}
\Pr_x [A(x_i,z_i)=_p 0] \leq  \prod_{i}\frac{1}{\iota(wt(v_i
* A(\cdot, z_i)))^{\beta}}
 \leq  \prod_{i}\frac{1}{\iota(wt(v_i
* (u * A(\cdot, z_i))))^{\beta}}.
\end{eqnarray}

Note that, $A$ is a hypergraph, thus, for every $j$ such that $
(A_{i_1,i_2,\ldots,i_d})_{i_1=j,i_2,\ldots,i_d} \in U$, we have that
$$((A_{i_1,i_2,\ldots,i_d})_{i_1=j,i_2,\ldots,i_d}, j)\in Q.$$
Therefore,
$$
\sum_i wt(v_i
* (u * A(\cdot, z_i))) \geq \frac{qk_2}{2d^d}.
$$
Using Lemma \ref{inequality} we have
$$
\prod_{i}{\iota(wt(v_i
* (u * A(\cdot, z_i))))} \geq q^{\lfloor\frac{\frac{qk_2}{2d^d} - k_2}{q-1}\rfloor} =
m^{c_1k_2}.
$$
Therefore, using (\ref{q2}), $\Pr_x [A(x_i,z_i) =_p 0] \leq
\frac{1}{m^{c_1\beta k_2}} $. Thus, the probability that there
exists a matrix $A\in\mathcal{A}_1$ such that for all $i\in[k_2]$ we
have $A(x_i,z_i) = 0$ is
$$
\Pr_x [A(x_i,z_i) =_p 0]  \leq \frac{|\mathcal{A}_1|}{m^{c_1\beta
k_2}} \leq \frac{\binom{n^d}{m}p^m}{m^{c_1\beta k_2}} \le
\frac{n^{dm}n^m}{m^{c_1\beta k_2}} < 1,
$$
for large enough constant. This implies Lemma \ref{l9}.
\end{proof}
This completes the proof of Theorem \ref{Th1}.
\end{proof}

\end{document}